%% file: main.tex
\pdfoutput=1
\documentclass[letterpaper, 10 pt, conference]{ieeeconf}
\usepackage{type1cm}
\IEEEoverridecommandlockouts                              
\usepackage{graphicx} 
\usepackage{wrapfig}
\usepackage{amsfonts}
\usepackage{amsmath}
\usepackage{amssymb}
\usepackage{float}
\usepackage{adjustbox}

\usepackage{tikz}
\usetikzlibrary{positioning, shapes, arrows, calc, decorations.pathreplacing, arrows.meta, backgrounds, patterns, overlay-beamer-styles, shapes.symbols, decorations.pathmorphing}
\usepackage{tikz-cd}
\usepackage[dvipsnames]{xcolor}

\input{rov}
\newcommand\copyrighttext{%
  \footnotesize This work has been submitted to the IEEE for possible publication. Copyright may be transferred without notice, after which this version may no longer be accessible.}
\newcommand\copyrightnotice{%
\begin{tikzpicture}[remember picture,overlay]
\node[anchor=south,yshift=10pt] at (current page.south) {\fbox{\parbox{\dimexpr\textwidth-\fboxsep-\fboxrule\relax}{\copyrighttext}}};
\end{tikzpicture}%
}

\newtheorem{theorem}{Theorem}[section]
\newtheorem{lemma}{Lemma}[section]
\newtheorem{remark}{Remark}[section]
\title{Complex-Valued GNNs for Distributed Basis-Invariant Control of Planar Systems}
\author{Samuel Honor, Mohamed Abdelnaby and Kevin Leahy
\thanks{S. Honor, M. Abdelnaby and K. Leahy are with the Robotics Engineering Department, Worcester Polytechnic Institute, Worcester, MA 01609, USA. 
        {\tt\small \{sjhonor,mabdelnaby,kleahy\}@wpi.edu}.   
        This work was supported by ONR under grant number N00014-25-1-2225.}%
}

\date{February 2026}
\tikzset{
    fr/.pic={
        \draw[red, thick] (0,0) -- (0.6, 0);
        \draw[green, thick] (0,0) -- (0,0.6);
        \filldraw[black] (0,0) circle (0.05);
        \coordinate (-mid) at (0.5, 0.5) {};
        \coordinate (-o) at (0,0);
    }
}

\tikzset{
    frov/.pic={
    \pic[transform shape] (r) {rov};
    \pic[rotate=90, transform shape] (f) at (r-center) {fr};
    \coordinate (-center) at (r-center);
    }
}

\tikzset{link/.style={
    thick, gray, opacity=0.6
}}

\tikzset{
  enc/.style={magenta, line width=0.7mm, arrows={-Latex[angle=40:2.5mm, width=2mm, length=2mm]}}
}

\tikzset{latent/.style={enc, decorate, decoration=snake}}

\begin{document}

\maketitle
\copyrightnotice
\begin{abstract}
    Graph neural networks (GNNs) are a well-regarded tool for learned control of networked dynamical systems due to their ability to be deployed in a distributed manner. However, current distributed GNN architectures assume that all nodes in the network collect geometric observations in compatible bases, which limits the usefulness of such controllers in GPS-denied and compass-denied environments. This paper presents a GNN parametrization that is globally invariant to choice of local basis. 2D geometric features and transformations between bases are expressed in the complex domain. Inside each GNN layer, complex-valued linear layers with phase-equivariant activation functions are used. When viewed from a fixed global frame, all policies learned by this architecture are strictly invariant to choice of local frames. This architecture is shown to increase the data efficiency, tracking performance, and generalization of learned control when compared to a real-valued baseline on an imitation learning flocking task. 
\end{abstract}
\section{Introduction}
As described in \cite{vasile_translational_2018}, controllers for pairwise interaction systems that have a \textit{quasi-linear} form can provide control independent of a global reference frame. Many controllers for "geometric" swarming tasks such as flocking \cite{jadbabaie_coordination_2003} distance-only formation control \cite{CAO2011776}, and---as \cite{vasile_translational_2018} remark---even $N$-body Hamiltonian dynamics \cite{meyer_introduction_2009} have such quasi-linear forms and are therefore invariant to choice of reference frame. However, it is not always practical to develop a closed-form quasi-linear controller for a given task. For abstract tasks, it is often desirable to use a learning approach for multi-robot control.

Graph Neural Networks (GNNs) have been proven both in theory and in practice to be powerful tools for learning geometric multi-robot control tasks. \cite{xu_how_2019} proposes \textit{algorithmic alignment} as a metric for predicting whether a neural network architecture can efficiently learn a class of algorithms. The aggregation over messages from neighbors in message-passing GNNs mirror the summations found in nearest-neighbor control laws, so it is expected that GNNs would be able to learn nearest-neighbor controllers well. Indeed, works such as ~\cite{tolstaya-learning-2020}, and ~\cite{chen-learning-2024} have demonstrated that GNNs can learn nearest-neighbor controllers that are resilient to temporal delays and changing graph support. However, these GNN-based controllers still require a global reference frame.

The need for a global reference frame in contemporary distributed GNN-based controllers arises from the need to aggregate latent space encodings. Even if---as required in \cite{vasile_translational_2018}---each agent can observe their neighbors' raw geometric encodings (such as velocity) in its own frame, the latent-space representation each agent produces in its first layer is expressed in a high-dimensional basis is related to agents' body frame. In the second layer onward, these high-dimensional encodings cannot be observed and must instead be transmitted. However, all the encodings an agent receives in the second layer onward will be expressed in different latent-space frames, so summing them is mathematically nonsensical unless they are first transformed into a common frame. This paper presents a complex-valued parameterization for a distributed GNN that makes this latent-space transformation possible.






\section{Background}

\subsection{Graph Neural Networks}
A layer in a standard message-passing graph neural networks is constructed as follows:
\begin{equation}\label{eq:mpgnn}
    \mathbf{x}^{(l+1)}_i = \phi \left(\mathbf{x}^{(l)}_i, \bigoplus_{j\in\mathcal{N}(i)}\psi\left(\mathbf{x}^{(l)}_i, \mathbf{x}^{(l)}_j, \mathbf{e}_{ij}\right)\right)
\end{equation}
Where $\psi$ and $\phi$ are learnable functions, and $\bigoplus$ is a permutation-invariant aggregation function such as sum or mean \cite{bronstein_geometric_2021}. Like in feedforward neural networks, graph neural networks can be composed by applying one layer after another with a nonlinearity in between. Since each node's latent encoding includes information from its neighborhood, composing multiple GNN layers allows agents to leverage information from far-away neighbors without the need for expensive flood routing. 

Many variants of message-passing GNNs exist \cite{hamilton_inductive_2018, kipf_semi-supervised_2017, xu_how_2019} and have proven useful in multi-agent control problems \cite{tolstaya-learning-2020, chen-learning-2024, zhang_threat-adaptive_2026}. This paper adapts the GraphSAGE SAGEConv layer \cite{hamilton_inductive_2018}, which is formulated as follows:
\begin{equation}
    \mathbf{x}_i^{(l+1)} = W^{(l)}\cdot\begin{bmatrix}
        \mathbf{x}_i^{(l)} \\
        \underset{j\in\mathcal{N}(i)}{\text{mean}}\,\mathbf{x}_j^{(l)}
    \end{bmatrix}
\end{equation}

\subsection{Complex-Value Neural Networks}\label{sec:cvnns}
Complex-valued neural networks (CVNNs) \cite{hammad_comprehensive_2024} have shown promise for controlling systems with ingrained symmetrical properties, such as kinematic chains \cite{maeda_robot_2014}. The structure of a feedforward CVNNs is very similar to that of a real-valued network, just with the inputs, outputs, and weight matrices all expressed in the complex field. 

One notable difference between CVNNs and real-value networks is the activation function (AF). Since scalars in $\mathbb{C}$ have two components, there are many different ways to apply an AF. Split real-imaginary AFs individually apply some real valued nonlinearities to the real and imaginary part of a layer output; and split phase-amplitude AFs apply the nonlinearities independently to the phase and amplitude \cite{hammad_comprehensive_2024}. This paper will use a split phase-amplitude AF that applies $\tanh$ to the amplitude and fixes the phase \cite{hirose_continuous_1992}.

\subsection{Group Theory}
This section offers a succinct overview of the mathematics necessary to frame the problem statement and develop the methodology. 

\subsubsection{Groups and Linear Representations}
A group $G$ is an algebraic structure consisting of a set along with an associative binary operation $\cdot: G\times G \rightarrow G$. $G$ must contain an identity element $e$ such that for any $g\in G$, $g\cdot e = e\cdot g = g$. Additionally, there must exist an inverse $g^{-1}$ for every $g\in G$ such that $g^{-1}\cdot g = g \cdot g^{-1} = e$. Groups are useful structures for reasoning about symmetry.

It is often said that groups represent the symmetries of an object. However, more tooling is needed to transform an object by a symmetry encoded by a group. A group action $\triangleright$ defines how elements of a group transform some object to one of its symmetries. Formally, for a set $S$ and a group $G$, the (left) action of $G$ on $S$ is a function $\triangleright:G\times S\rightarrow S$ that satisfies the following properties \cite{grabowski_representation_2025}:
\begin{gather}
    e\triangleright s= s \quad \forall s \in S \\
    g_1 \triangleright (g_2 \triangleright s) = (g_1\cdot g_2) \triangleright s \quad\forall g_1, g_2 \in G, \ s\in S
\end{gather}

A linear representation $\rho$ is a homomorphism from a group $G$ to the group of invertible matrices over a field $\mathbb{K}$, $GL_n(\mathbb{K})$. The group action of $G$ on $\mathbb{K}^n$ is given by matrix multiplication \cite{grabowski_representation_2025}.

Of interest in this paper is the complex representation of the 2D rotation group $SO(2)$. For an angle $\theta_R$ representing the angle by which an element $R\in SO(2)$ rotates an object, the complex representation of $R$ is given by:
\begin{equation}\label{eq:repr}
    \rho_{\mathbb{C}}(R) = e^{j\theta_R}
\end{equation}
Where $j$ is the imaginary unit.

The fact that $\rho_\mathbb{C}(R)$ is a scalar in $\mathbb{C}$ makes scaling its representation to $GL_n(\mathbb{C})$ to apply 2D rotations to high-dimensional latent-space complex vectors unnecessary. Since  multiplication of a matrix or a vector by a scalar $c$ is equivalent to multiplication by the the diagonal matrix $cI_n$, the $GL_n(\mathbb{C)}$ representation of $SO(2)$ is trivial to acquire and tautological to pursue in calculations.

\subsubsection{Equivariance and Invariance}
A function $f: A\rightarrow B$ is equivariant to a group $G$ if for all $a\in A$ and $g \in G$, $f(g\triangleright_A a) = g\triangleright_B(a)$, where $\triangleright_A$ is the group action of $G$ on the domain $A$ and $\triangleright_B$ its action on the codomain $B$. $f$ is invariant to $G$ if for any $g\in G$ and $a \in A$, $f(g\triangleright_A a) = f(a)$. This is a special case of equivariance, where $\triangleright_B$ is the trivial action that maps all elements of $G$ to the identity function on the codomain, $\mathbf{id}_B$.

\section{Problem Statement}\label{prob}
This section will set up notation for a swarm of agents without a global reference frame, and present the conditions which a GNN-based controller must satisfy to remain invariant to the orientation of body frames. 


Let $\mathcal{G} =(\mathcal{V}, \mathcal{E}, \mathcal{B}, \mathbf{g}, \mathbf{e})$ be an encoded graph representing a planar swarm of robots where each robot $i\in\mathcal{V}$ has a tuple of $k$ 2D geometric features $\mathbf{g}_i = \left(g^{(i)}_1, g^{(i)}_2, \dots, g^{(i)}_k\right) \in \mathbb{R}^2$ expressed in the orthonormal basis of its body frame $\mathcal{B}_i$. Each robot occupies a point in space $\mathbf{p}_i\in\mathbb{R}^2$ in some fixed global frame, though this feature is not known to the agents. An edge $e_{ij}$ exists between two robots if the distance between them $|\mathbf{p}_i - \mathbf{p}_j|$ is less than communication radius $C$. Each edge $e_{ij}\in\mathcal{E}$ contains a 2D rotation $\mathbf{e}_{ij}\in SO(2)$ providing the rotation from frame $\mathcal{B}_i$ to $\mathcal{B}_j$. 
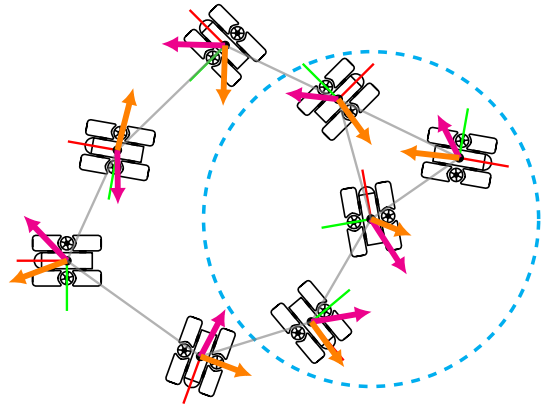
\begin{figure}
    \centering
    \vspace{0.25cm}
    \begin{adjustbox}{width=0.4\textwidth}
        \begin{tikzpicture}     
        \pic[rotate=45] (r1) at (0, 0) {frov} ;
        \pic[rotate=-100] (r2) at (2, 0.2) {frov};
        \pic[rotate=160] (r3) at (0.3, -2.3) {frov};
        \pic[rotate=-140] (r4) at (0.8, -1.5) {frov};
        \pic[rotate=80] (r5) at (-0.8, -1) {frov};
        \pic[rotate=10] (r6) at (1.2, -2) {frov};
        \pic[rotate=-45] (r7) at (0.3, 0.1) {frov};
        \pic[rotate=90] (r8) at (-1.3, -2.2) {frov};

        \draw[very thick, cyan, dashed] (r6-center) circle (2);

        \draw[link] (r6-center) -- (r4-center);
        \draw[link] (r6-center) -- (r2-center);
        \draw[link] (r6-center) -- (r7-center);

        \draw[link] (r7-center) -- (r1-center);
        \draw[link] (r7-center) -- (r2-center);

        \draw[link] (r1-center) -- (r5-center);

        \draw[link] (r5-center) -- (r8-center);

        \draw[link] (r3-center) -- (r8-center);
        \draw[link] (r3-center) -- (r4-center);

        \foreach \c in {magenta, orange} {
            \foreach \x in {1, ..., 8} {
                \pgfmathsetmacro\th{rnd*360}
                \pgfmathsetmacro\lng{rnd*0.3 + 0.5}
                \draw[enc, \c] (r\x-center) -- +(\th:\lng);
            }
        }        
    \end{tikzpicture}
    \end{adjustbox}
    
    \caption{In the swarms considered in this paper, each robot has one or several geometric features (orange and magenta arrows), such as velocity or acceleration, expressed in its body frame (red and lime green axes). Robots share a communication edge (grey line) if they are within a communication radius, indicated by the dotted cyan circle in this diagram. An edge $\mathbf{e}_{ij}$ from robot $v_i$ to robot $v_j$ comes with an encoding in $SO(2)$ representing the rotation from the body frame of robot $i$, $\mathcal{B}_i$ to that of robot $j$.}
    \label{fig:swarm}
\end{figure}



For a GNN to be invariant in the global frame to choice of local basis, it must hold that perturbing the local frames $\mathcal{B}$ does not perturb the GNN outputs as viewed in the global frame. Consider a graph $\mathcal{G}$ representing a swarm of robots and a copy $\mathcal{G}'$ where each node's local basis $\mathcal{B}_i'$ is perturbed by some rotation $\delta_i\in SO(2)$. This rotation \eqref{eq:brot} of the body frames in the global frame induces an equal and opposite rotation \eqref{eq:grot} of the geometric features in the local frame. Additionally, the rotations between frames $\mathbf{e'}$ are altered as described in \eqref{eq:erot}.
\begin{gather}
    \delta\triangleright\mathcal{G} = (\mathcal{V}, \mathcal{E}, \mathcal{B}', \mathbf{g}', \mathbf{e}') \qquad \text{where}\\
    \mathcal{B}_i' = \delta_i \triangleright \mathcal{B}_i \label{eq:brot}\\
    \mathbf{g}_i' = \left(\delta_i^{-1} \triangleright g^{(i)}_1, \delta_i^{-1} \triangleright g^{(i)}_2\dots, \delta_i^{-1}\triangleright g_k^{(i)}\right) \label{eq:grot} \\
    \mathbf{e}'_{ij} = \delta^{-1}_i\triangleright\delta_j\triangleright\mathbf{e}_{ij} \label{eq:erot}
\end{gather}
\begin{remark}[Global invariance requires local equivariance]
    Note in \eqref{eq:grot} that when $\mathcal{B}_i$ is rotated by a $\delta$, the geometric features $\mathbf{g}_i$ are be rotated an equal and opposite amount to maintain their location in the global frame. For a GNN to be globally invariant to local bases, it's output must rotate in the same manner as $\mathbf{g}_i$. Locally-speaking, this means that the GNN must \textbf{not} be invariant to frame rotation, but actually equivariant. When we speak of "basis invariance" of our proposed model, we are thinking in the global frame. However the condition in \ref{th:invar} used to enforce this invariance is a condition of equivariance to $SO(2)$ in the local frame.
\end{remark}
\begin{theorem}[Global Invariance to Local Frames]\label{th:invar}
    A GNN $\Theta$ is globally invariant to local basis if and only if for any  graph $\mathcal{G}$ and any set of frame perturbations $\delta$, it holds that:
    \[ \Theta(\delta \triangleright \mathcal{G}) = \delta \triangleright \Theta(\mathcal{G})\]
\end{theorem}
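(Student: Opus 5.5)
The plan is to make ``globally invariant to local basis'' precise and then reduce both directions of the equivalence to a pointwise identity between local and global coordinates. Write $\Theta(\mathcal{G})_i\in\mathbb{R}^2$ (or $\mathbb{C}$) for the output at node $i$, expressed in the body frame $\mathcal{B}_i$. Let $R_i\in SO(2)$ be the rotation taking the global frame to $\mathcal{B}_i$. The output viewed globally is then $R_i\triangleright\Theta(\mathcal{G})_i$. Global invariance means that for every $\mathcal{G}$ and every family $\delta=(\delta_i)_{i\in\mathcal{V}}$,
\[ R_i'\triangleright\Theta(\delta\triangleright\mathcal{G})_i = R_i\triangleright\Theta(\mathcal{G})_i \quad\forall i, \]
where $R_i'$ is the global-to-local rotation of the perturbed frame $\mathcal{B}_i'$. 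The action of $\delta$ on outputs is defined nodewise as $(\delta\triangleright\Theta(\mathcal{G}))_i=\delta_i^{-1}\triangleright\Theta(\mathcal{G})_i$. This is the same rule \eqref{eq:grot} uses for features, which is exactly the point of the preceding Remark.

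The first step is to express $R_i'$ through \eqref{eq:brot}. Since $\mathcal{B}_i'=\delta_i\triangleright\mathcal{B}_i$, we have $R_i'=R_i\delta_i$. I would check consistency against \eqref{eq:grot}: a vector $v$ fixed in the global frame has local coordinates $R_i^{-1}v$ before the perturbation and $\delta_i^{-1}R_i^{-1}v$ after. This matches \eqref{eq:grot}. Because $SO(2)$ is abelian, the ordering of the composition is harmless, and the same check confirms that \eqref{eq:erot} is consistent.

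With this, the invariance condition becomes $R_i\delta_i\triangleright\Theta(\delta\triangleright\mathcal{G})_i=R_i\triangleright\Theta(\mathcal{G})_i$. Applying $(R_i\delta_i)^{-1}$ to both sides, which is legitimate by the group action axioms, gives $\Theta(\delta\triangleright\mathcal{G})_i=\delta_i^{-1}\triangleright\Theta(\mathcal{G})_i=(\delta\triangleright\Theta(\mathcal{G}))_i$. Every step is an equivalence, since rotations are bijections and the inverse action undoes the action. So the nodewise identity holds for all $i,\mathcal{G},\delta$ if and only if $\Theta(\delta\triangleright\mathcal{G})=\delta\triangleright\Theta(\mathcal{G})$. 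This proves both directions at once.

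The main obstacle is definitional rather than computational. The statement only makes sense once the action of $\delta$ on the output space is fixed, and once the relation between frame rotation and coordinate rotation is fixed with consistent conventions. I would therefore state the output action explicitly, as the nodewise inverse rotation matching \eqref{eq:grot}, and verify the coordinate-change formula $R_i'=R_i\delta_i$ carefully. The rest is a one-line cancellation.
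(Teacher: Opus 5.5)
Your proposal is correct and is essentially the paper's own argument. The paper gives no separate proof, only the preceding Remark: features counter-rotate by $\delta_i^{-1}$ under \eqref{eq:grot}, so the output must rotate the same way. You make this rigorous by fixing the nodewise output action $(\delta\triangleright\Theta(\mathcal{G}))_i=\delta_i^{-1}\triangleright\Theta(\mathcal{G})_i$ and cancelling the frame change $R_i'=R_i\delta_i$, which yields both directions of the equivalence at once.
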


Because a GNN is a composition of aggregations, linear layers, and nonlinearities, the above criteria can be decomposed into local equivariance conditions placed on these individual components. 
\begin{lemma}\label{lm:comp}
    If each component in the whole composition that forms the GNN is independently equivariant to frame perturbations, then it must follow that the composition of all components is similarly equivariant.
\end{lemma}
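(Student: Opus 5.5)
We argue by induction on the number of components, using the basic fact that equivariant maps are closed under composition. First we make the objects precise. Write the GNN as $\Theta = f_L \circ f_{L-1} \circ \cdots \circ f_1$, where each $f_\ell$ is one of the building blocks: a linear layer, an aggregation over neighbors, or a nonlinearity. Each block maps a graph-structured signal space $X_{\ell-1}$ to $X_\ell$. Every $X_\ell$ carries an action of the perturbation group $\prod_{i\in\mathcal{V}} SO(2)$, written $\delta \triangleright_{\ell}$. On node-level signals this action is $\delta_i^{-1}$ applied to node $i$'s entry, matching \eqref{eq:grot}, and it rotates the edge encodings as in \eqref{eq:erot}. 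The hypothesis is that each block satisfies
\[ f_\ell(\delta \triangleright_{\ell-1} x) = \delta \triangleright_\ell f_\ell(x) \]
for all $x \in X_{\ell-1}$ and all perturbations $\delta$. Here $\triangleright_0$ is the action on the input graph, and $\triangleright_L$ is the action on outputs appearing in Theorem~\ref{th:invar}.

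\textbf{Base case.} For a single component, the claim is the hypothesis itself.

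\textbf{Inductive step.} Suppose $F_\ell = f_\ell \circ \cdots \circ f_1$ satisfies $F_\ell(\delta\triangleright_0 x) = \delta\triangleright_\ell F_\ell(x)$. Then
\[ F_{\ell+1}(\delta\triangleright_0 x) = f_{\ell+1}\bigl(F_\ell(\delta\triangleright_0 x)\bigr) = f_{\ell+1}\bigl(\delta\triangleright_\ell F_\ell(x)\bigr) = \delta\triangleright_{\ell+1} F_{\ell+1}(x). \]
The middle equality uses the inductive hypothesis and the last uses equivariance of $f_{\ell+1}$. Taking $\ell+1=L$ gives $\Theta(\delta\triangleright\mathcal{G}) = \delta\triangleright\Theta(\mathcal{G})$, which by Theorem~\ref{th:invar} is exactly global invariance to local bases.

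\textbf{Main obstacle: compatible actions.} The inductive step needs the output action of block $f_\ell$ to coincide with the input action of block $f_{\ell+1}$. The hypothesis ``each component is independently equivariant'' must therefore be read with respect to a single, fixed choice of intermediate action $\triangleright_\ell$ on each $X_\ell$, one that both adjacent components use. I would state this explicitly. With the complex parametrization of Section~\ref{sec:cvnns}, the natural choice is uniform across layers: node $i$'s latent vector is multiplied by the scalar $\rho_\mathbb{C}(\delta_i^{-1})$, using \eqref{eq:repr} and the remark that the $GL_n(\mathbb{C})$ representation is just scalar multiplication. This makes compatibility automatic. One more detail: components that read edge data, such as an aggregation that transforms neighbor encodings by $\mathbf{e}_{ij}$, need the edge features to be carried along unchanged through the composition. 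To handle this, I would regard each $X_\ell$ as containing the fixed edge encodings together with the node signals, so that the action on $X_\ell$ includes \eqref{eq:erot} and every block passes edges through identically. Identity on edges is trivially equivariant, so the induction goes through.
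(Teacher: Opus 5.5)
Your argument is correct. The paper gives no proof to compare it against: Lemma~\ref{lm:comp} is simply asserted after the remark that a GNN is a composition of aggregations, linear layers and nonlinearities. Your induction is the standard argument that makes that assertion rigorous.

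What you add is the point that ``independently equivariant'' only chains if two conditions hold. First, each intermediate action $\triangleright_\ell$ must be fixed once and shared by the two blocks adjacent to $X_\ell$. Second, the edge encodings must be carried along in every $X_\ell$. The paper leaves both implicit, and they are worth keeping as an explicit part of the hypothesis.

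One caution about your concrete instantiation, which does not affect the lemma since the lemma is conditional. Read as left multiplication, \eqref{eq:erot} gives $\mathbf{e}'_{ij}=\delta_i^{-1}\delta_j\mathbf{e}_{ij}$. The paper's own check that the message function \eqref{eq:mp} is equivariant instead uses $(\mathbf{e}'_{ij})^{-1}=\delta_j\delta_i^{-1}\mathbf{e}^{-1}_{ij}$, that is, $\mathbf{e}'_{ij}=\delta_i\delta_j^{-1}\mathbf{e}_{ij}$. With the former, the perturbed message is $\rho_{\mathbb{C}}(\delta_i\delta_j^{-2})\rho_{\mathbb{C}}(\mathbf{e}_{ij}^{-1})\mathbf{x}_j$, so $\psi$ is not equivariant under your node action $\rho_{\mathbb{C}}(\delta_i^{-1})$. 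With the latter it is. So the latter is the edge action to build into each $X_\ell$ when you verify the hypothesis block by block.
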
 
Exploiting Lemma~\ref{lm:comp}, the following section will propose a GNN layer and activation function that are equivariant to perturbations of local bases. When composed, these layers and activation functions form a GNN that is guaranteed to satisfy Theorem~\ref{th:invar}.

\section{Methods}\label{sec:methods}
This section will develop a GNN layer and activation function that are globally invariant to choice of local bases. We begin by establishing why working in the field of complex numbers simplifies the process of ensuring basis invariance. Next, we propose a complex-value SAGEConv layer that applies a pointwise rotation in the message function. Finally, we present a modification to the split phase-amplitude $\tanh$ activation function \cite{hirose_continuous_1992} that incorporates a bias on the amplitude and phase to allow for negative saturation of the activation function.

\subsection{Why Use Complex Numbers?}\label{sec:whycomp}


Working in the complex field allows us to analyze $SO(2)$ equivariance in a pointwise fashion in the latent space. Pointwise enforcement of $SO(2)$ equivariance is desirable, as vector-wise enforcement would require complicated restrictions on the properties of the weight matrix. This difficulty becomes apparent when attempting to enforce equivariance on a real-value weight matrix. Consider a real-value weight matrix $W_{\mathbb{R}}\in\mathbb{R}^{m\times 2k}$ and a vector of $k$ vertically concatenated 2D geometric features $\mathbf{x}_{\mathbb{R}_i} \in \mathbb{R}^{2k}$:
\begin{equation}\label{eq:realin}
    \mathbf{x}_{\mathbb{R}_i} = \begin{bmatrix}
        \left(g_1^{(i)}\right)^T &
        \left(g_2^{(i)}\right)^T  &
        \cdots  &
        \left(g_k^{(i)}\right)^T 
    \end{bmatrix}
\end{equation}
To prove $SO(2)$ equivariance of real-valued matrix multiplication, we would need to show that:
\begin{align}
    W_\mathbb{R}\left(\rho_a\left(\delta_i^{-1}\right)  \mathbf{x}_{\mathbb{R}_i}\right) &= \rho_b\left(\delta_i^{-1}\right) W_\mathbb{R}\mathbf{x}_{\mathbb{R}_i} \\
    \rho_b\left(\delta_i \right)W_\mathbb{R}\rho_a\left(\delta_i^{-1}\right) &= W_\mathbb{R} \label{eq:nosat}
\end{align}
Where the representation $\rho_a$ of $\delta_i^{-1}$ on $\mathbf{x}_{\mathbb{R}_i}$ is the direct sum of $k$ copies of the linear representation of $\delta_i^{-1}$ in $\mathbb{R}^2$---in other words, $\rho_a\left(\delta_i^{-1}\right)$ $k$ identical rotation matrices tiled down the diagonal. The condition \eqref{eq:nosat} does not hold for all arbitrary $W_\mathbb{R}$ regardless of choice of $\rho_b$. Significant conditions would need to be placed on $W_{\mathbb{R}}$ for \eqref{eq:nosat} to be satisfiable.

Consider now a complex formulation with a weight matrix $W_\mathbb{C}\in \mathbb{C}^{m\times k}$ and a feature vector $\mathbf{x}_{\mathbb{C}_i}\in\mathbb{C}^k$ composed of complex representations of $\mathbf{g}_i$ given by the natural isomorphism $\gamma: \mathbb{R}^2 \rightarrow \mathbb{C}$ projecting the $x$ and $y$ components of a real number onto the real and imaginary axes of the complex plane.
\begin{equation}\label{eq:compin}
    \mathbf{x}_{\mathbb{C}_i} = \left[\gamma\left(g_1^{(i)}\right), \gamma\left(g_2^{(i)}\right), \dots, \gamma\left(g_k^{(i)}\right)\right]^T
\end{equation}

Similar to the real-value formulation, multiplication by $W_\mathbb{C}$ is an $SO(2)$ equivariant operation if the following holds:
\begin{equation}
    \rho_\beta\left(\delta_i \right)W_\mathbb{C}\rho_\alpha\left(\delta_i^{-1}\right) = W_\mathbb{C} \label{eq:sat}
\end{equation}
However, unlike in \eqref{eq:nosat}, there is a simple choice of $\rho_\alpha$ and $\rho_\beta$ that satisfies \eqref{eq:sat} for any $W_\mathbb{C}$. Setting $\rho_\alpha = \rho_\beta$ to be $\rho_\mathbb{C}$---the complex scalar representation of $SO(2)$--- satisfies \eqref{eq:sat} 

\begin{theorem}[$SO(2)$ Equivariance of complex matrix multiplication]\label{th:mult}
    For any complex weight matrix $W_\mathbb{C}\in\mathbb{C}^{m\times n}$, any complex vector $\mathbf{x}\in\mathbb{C}^n$, and rotation $R\in SO(2)$ by $\theta_R$, it holds that:
    \[\rho_\mathbb{C}(R)\left(W_\mathbb{C}\mathbf{x}\right) = W_\mathbb{C}\left(\rho_\mathbb{C}(R)\mathbf{x}\right)\]
\end{theorem}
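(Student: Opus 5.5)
The statement follows because $\rho_\mathbb{C}(R)=e^{j\theta_R}$ is a complex \emph{scalar}, and scalars commute with every complex matrix. I will make this explicit componentwise so that the argument uses nothing beyond the field axioms of $\mathbb{C}$. Write $c=\rho_\mathbb{C}(R)=e^{j\theta_R}$, as given in \eqref{eq:repr}. As noted after \eqref{eq:repr}, the action of $R$ on $\mathbb{C}^n$ (and on $\mathbb{C}^m$) is multiplication by $c$, or equivalently by the diagonal matrix $cI_n$ (respectively $cI_m$). I will record this identification first, so that both sides of the claimed equation are seen to use the same representation $\rho_\mathbb{C}$ on domain and codomain. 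This is exactly the choice $\rho_\alpha=\rho_\beta=\rho_\mathbb{C}$ in \eqref{eq:sat}.

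Next I compute the $i$-th entry of each side. On the left,
\[\bigl(c\,(W_\mathbb{C}\mathbf{x})\bigr)_i = c\sum_{k=1}^n w_{ik}x_k .\]
On the right,
\[\bigl(W_\mathbb{C}(c\,\mathbf{x})\bigr)_i=\sum_{k=1}^n w_{ik}(c\,x_k).\]
Distributivity of multiplication over the finite sum, together with commutativity and associativity of multiplication in $\mathbb{C}$, gives $c\,w_{ik}x_k=w_{ik}\,c\,x_k$ termwise. Summing over $k$ yields equality for every $i=1,\dots,m$, hence equality of the two vectors.

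Alternatively, in matrix language, $(cI_m)W_\mathbb{C}=cW_\mathbb{C}=W_\mathbb{C}(cI_n)$, and associativity of matrix multiplication then gives the claim in one line. As a corollary, this establishes \eqref{eq:sat}: multiplying both sides by $\rho_\mathbb{C}(R)^{-1}=e^{-j\theta_R}$, which exists since $|c|=1$, gives $\rho_\mathbb{C}(R)W_\mathbb{C}\rho_\mathbb{C}(R^{-1})=W_\mathbb{C}$. Here I use that $\rho_\mathbb{C}$ is a homomorphism, so $\rho_\mathbb{C}(R^{-1})=c^{-1}$.

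There is no genuine obstacle. The only point needing care is conceptual rather than computational: the same scalar representation $\rho_\mathbb{C}$ must act on both the $n$-dimensional input and the $m$-dimensional output. I would stress this, because it is what fails in the real case \eqref{eq:nosat}. There, $\rho_a(\delta_i^{-1})$ is a block-diagonal rotation matrix, which does not commute with a general $W_\mathbb{R}$, whereas in the complex case the representation is central.
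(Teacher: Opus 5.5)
Your proof is correct and is essentially the paper's argument: the paper simply notes that $\rho_\mathbb{C}(R)=e^{j\theta_R}$ is a scalar and so can be moved past $W_\mathbb{C}$ by commutativity. Your componentwise computation, or equivalently the identity $(cI_m)W_\mathbb{C}=W_\mathbb{C}(cI_n)$, just makes this explicit, and your corollary recovering \eqref{eq:sat} with $\rho_\alpha=\rho_\beta=\rho_\mathbb{C}$ is also correct.
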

\begin{proof}
    Theorem \ref{th:mult} can be proved using the commutative property of scalar multiplication. Since $\rho_\mathbb{C}(R)$ is a scalar, it can be moved before the $W_\mathbb{C}$ on the right side of the equation, making both sides equal.
\end{proof}




\subsection{$SO(2)$-Equivariant GNN Layer}
With the motivation for using a complex-valued parameterization established, we now propose a complex-valued adaptation of the SAGEConv layer \cite{hamilton_inductive_2018}. Recall that a GNN layer in a message-passing GNN (\ref{eq:mpgnn}) is composed of a message function $\psi$, an aggregation function $\bigoplus$, and an update function $\phi$. In the vanilla SAGEConv layer, the message function is the identity function applied to the incoming message $\mathbf{x}_j$. This is appropriate when all agents (and the messages they send) share the same reference frame, however our architecture does not make this assumption. We adapt $\psi$ to rotate the incoming message into the local frame by using $\mathbf{e}_{ij}$:

\begin{equation}\label{eq:mp}
    \psi\left(\mathbf{x}_j^{(l)}, \mathbf{e}_{ij} \right) = \rho_\mathbb{C}\left(\mathbf{e}_{ij}^{-1}\right)\mathbf{x}_j^{(l)}
\end{equation}

$\psi$ satisfies the conditions for $SO(2)$ equivariance, as for any frame perturbations $\delta_i$ and $\delta_j$:
\begin{align}
    \psi\left(\mathbf{x'}_j^{(l)}, \mathbf{e'}_{ij} \right) &= \rho_\mathbb{C}\left(\mathbf\rho_\mathbb{C}\left(\delta_j\right)\rho_\mathbb{C}\left(\delta_i^{-1}\right)\mathbf{e}^{-1}_{ij}\right)\rho_\mathbb{C}\left(\delta_j^{-1}\right)\mathbf{x}_j^{(l)} \\
    &= \rho_\mathbb{C}\left(\delta_i^{-1}\right)\rho_\mathbb{C}(\mathbf{e}^{-1}_{ij})\mathbf{x}_j^{l}
\end{align}

The next step in a SAGEConv layer after message passing is aggregation. The standard SAGEConv layer uses $\text{mean}$ as the aggregation function, and our adaptation is no different. $\text{Mean}$ is a linear operator, so it is naturally equivariant to rotation. Transforming all summands in the mean by $\rho_\mathbb{C}\left(\delta_i^{-1}\right)$ has the same effect as multiplying the untransformed mean by the same factor.

The final component of a SAGEConv layer is the update function, $\phi$. In vanilla SAGEConv, this is a linear layer acting on the vertical concatenation of the node feature vector $\mathbf{x}_i^{(l)}$ and the result of aggregation. The adaptation we present uses a complex-valued weight matrix $W^{(l)}\in\mathbb{C}^{m\times k}$ with no bias, but is otherwise similar:

\begin{align}\label{eq:upd}
    \phi^{(l)}\left(\mathbf{x}_i^{(l)}, \bigoplus_{j\in \mathcal{N}(i)}\left(\mathbf{x}_j^{(l)}, \mathbf{e}_{ij}\right)\right) = \nonumber\\W^{(l)}\begin{bmatrix}
        \mathbf{x}_i^{(l)} \\
        \bigoplus_{j\in \mathcal{N}(i)}\left(\mathbf{x}_j^{(l)}, \mathbf{e}_{ij}\right)
    \end{bmatrix}
\end{align}

As established in Theorem~\ref{th:mult}, multiplication by a complex-value weight matrix is equivariant to rotation. Adding a bias in the traditional sense, however, would violate the basis-invariant properties that we seek to preserve. Since a scalar bias would remain constant in the local---not global---frame, there could be some bases where the bias annihilates an output in latent space, and others where it doubles its length. In the next subsection, we address this problem by applying a split phase-magnitude bias inside the activation function.

\subsection{$SO(2)$ Equivariant Activation Function and Bias}
The inclusion of nonlinear activation functions between layers is what gives neural networks their expressive power. However, as mentioned in section \ref{sec:cvnns}, there are many ways to apply an activation function to a complex number. Because we desire an activation function that is equivariant to rotation, we adapt the split phase-amplitude $\tanh$ function from \cite{hirose_continuous_1992}. This pointwise activation function shrinks the magnitude of each element of the latent space vector $\mathbf{c}$ to the $\tanh$ of its magnitude while leaving its direction unchanged.

We modify the split phase-amplitude $\tanh$ activation function to include a learnable real-value magnitude bias on the input, as well as a learnable rotation bias. Our method for biasing and applying an activation function on a single element of the output of the modified SAGEConv layer is formulated as follows:

\begin{equation}
    \sigma(c_i) = \begin{cases}
        \frac{e^{j\theta_i}\tanh(|c_i| + b_i) c_i}{|c_i|} & c_i\neq 0\\
        0 & c_i = 0
    \end{cases}
\end{equation}

It will now be proved that $\sigma$ satisfies the local equivariance criterion. Note that $c_i$ is an element of the output of the modified SAGEConv layer, which was previously proven equivariant to $SO(2)$, so it is \textbf{guaranteed} that rotating $\mathcal{B}_i$ by $\delta_i$ in the global frame will counter rotate $c_i$ an equal amount in the local frame. That is, $c_i' = \delta_i^{-1}c_i$. Recall that the magnitude of a complex number is unchanged by rotation; therefore (analyzing the nonzero case):
\begin{align}
    \sigma(c_i') &= \frac{e^{j\theta_i}\tanh(|\rho_\mathbb{C}(\delta^{-1})c_i| + b_i) \rho_\mathbb{C}(\delta^{-1})c_i}{|\rho_\mathbb{C}(\delta^{-1})c_i|} \\
    &= \rho_\mathbb{C}(\delta^{-1}_i)\sigma(c_i)
\end{align}

\section{Experiments and Results}
In this section, we evaluate our frame-invariant GNN on an imitation learning task for control of a swarm of robots. Similar to the experimental methodology used by \cite{tolstaya-learning-2020}, we use DAGGER imitation learning \cite{ross_reduction_2011} to learn a flocking controller proposed by \cite{tanner_stable_2003}. We compare the performance of our network to that of a network formed of standard SAGEConv layers and $\tanh$ AFs, and demonstrate that our basis-invariant model consistently outperforms this baseline. We perform a parameter sweep over the number of layers and nodes per layer to demonstrate the increased representational efficiency of the basis invariant network.
\subsection{Experimental Setup}
\subsubsection{Swarm Setup and Nominal Controller}
Our experiments consider a swarm of $n$ holonomic double integrator robots cast as an encoded graph as described in in section \ref{prob}. Each robot has a velocity $\mathbf{v}_i$ and is capable of obtaining the positions $T_{ij}$ of its neighbors in its local frame $\mathcal{B}_i$. The nominal controller uses these geometric features to calculate accelerations that align the velocities of all robots in the swarm while keeping all agents a target distance $D$ apart:
\begin{align}
    a(i) &= \sum_{j\in\mathcal{G}\setminus i} (\mathbf{v}_i - \rho(\mathbf{e}_{ij}^{-1})\mathbf{v}_j)  -2 \frac{T_{ij}}{|T_{ij}|^4} + 2\frac{T_{ij}}{|T_{ij}|^2}\\
    b(i) &= \sum_{j\in\mathcal{G}\setminus i} (\mathbf{v}_i - \rho(\mathbf{e}_{ij}^{-1})\mathbf{v}_j)\\
    u_{\text{nom}}(i) &= \begin{cases}
        a(i)  & |T_{ij}| < D \\
        a(i)& |T_{ij}| >= D\label{eq:nom}
    \end{cases}
\end{align}
As in \cite{tolstaya-learning-2020}, the nominal controller operates under the assumption of a fully-connected graph. This causes the swarm to converge uniformly and rapidly upon the steady-state behavior with minimal dispersion. The local implementation of this controller---where instead of $a(i)$ and $b(i)$ from \eqref{eq:nom} summing over all indices expect for $i$ they index over only $i$'s direct neighbors---causes the agents to splay out quickly after initialization. We use imitation learning to approximate the behavior of the globally-connected controller with only local connectivity using our basis-invariant GNN.

\subsubsection{GNN Input Features}
Because the nominal controller involves a highly non-linear relationship on the translation from a node to its neighbors, we must provide nonlinear features to the GNNs tested in our experiments. Aggregating the raw $T_{ij}$ features before applying some learned transformation and a non-linearity destroys information that the network would need to properly estimate the nominal controller. Development of a network capable of such a task is left to future work.


The network is provided the same set of non-linear features used in \cite{tolstaya-learning-2020}. The set of geometric features $\mathbf{g}_i$ used to create the input to the GNN at each node are as follows:

\begin{align}\label{eq:feat}
    a(i) &=\sum_{j\in\mathcal{N} (i)} (\mathbf{v}_i - \rho(\mathbf{e}_{ij}^{-1})\mathbf{v}_j) \nonumber\\
    \mathbf{g}_i &= \left(a(i), \ \sum_{j\in\mathcal{N}(i)} \frac{T_{ij}}{|T_{ij}|^2} , \ \sum_{j\in\mathcal{N}(i)} \frac{T_{ij}}{|T_{ij}|^4}\right)
\end{align}

\subsubsection{Network and Training Setup}
Two architectures are compared in our experiments: the basis-invariant architecture developed in section \ref{sec:methods} and a minimally-modified real-valued SAGEConv baseline model presented below. Our baseline model includes a single modification on top of vanilla SAGEConv: it appends the angle $\theta_{ij}$ corresponding to the rotation $\mathbf{e}_{ij}$ between $\mathcal{B}_i$ and $\mathcal{B}_j$ to the message from $j$ to $i$ inside the message function $\psi$. We make this modification, because it would be unfair to deprive the baseline network of edge information. Consider two 2D geometric node encodings $g_j$ and $g'_j$ of equal magnitude that lie along the $x$-axis of their local bases $\mathcal{B}_j$ and $\mathcal{B}_j'$. If we did not append the transitions $\mathbf{e}_{ij}$ and $\mathbf{e}_{ij}'$ onto the messages formed with $g_j$ and and $g_j'$ before aggregating them at node $i$, then the messages would be identical. Failing to include rotation between frames would erroneously identify messages with the same values in local frames, but different values in the global frame.

The raw feature vector \eqref{eq:realin} used in the baseline model is a stacked vector, of the 2D feature vectors defined in \eqref{eq:feat}. The raw node feature vector \eqref{eq:compin} given to the basis-invariant model is the complex representations of the features stacked.

To ensure that the networks do not learn a policy that is somehow reliant on the global frame, each agent's body frame $\mathcal{B}_i$ is initialized to a random rotation at episode start and is rotated by some normally distributed $\delta_i$ at each timestep. While the basis-invariant model is---by construction---invariant to the distribution of body frames during training, the baseline model is not. If it were trained on an environment with a global frame (i.e. all $\mathbf{e}_{ij}$ are 0 degree rotations), it would learn a policy that works only in the global reference frame environment. The baseline model needs to learn how to use angle information, so it must be exposed to a wide distribution of it during training.

Data for training is collected using the DAGGER imitation learning framework \cite{ross_reduction_2011}. Models are trained using the ADAM optimizer \cite{kingma_adam_2017} and a mean-square error loss function. The loss function for the complex-value basis invariant network is more specifically the mean-square absolute error, as regular MSE loss on complex numbers would return a complex number, and loss must be a real number.

\begin{equation}\label{eq:msae}
    \mathcal{L}(\mathbf{u}, \mathbf{u}_{\text{nom}}) = \underset{i\in\mathcal{V}}{\text{mean}} \ |u_{\text{nom}_i} - u_i|^2
\end{equation}

\subsection{Experimental Results}
\subsubsection{Representational Capacity}
\begin{figure*}
    \centering
    \includegraphics[width=1\linewidth]{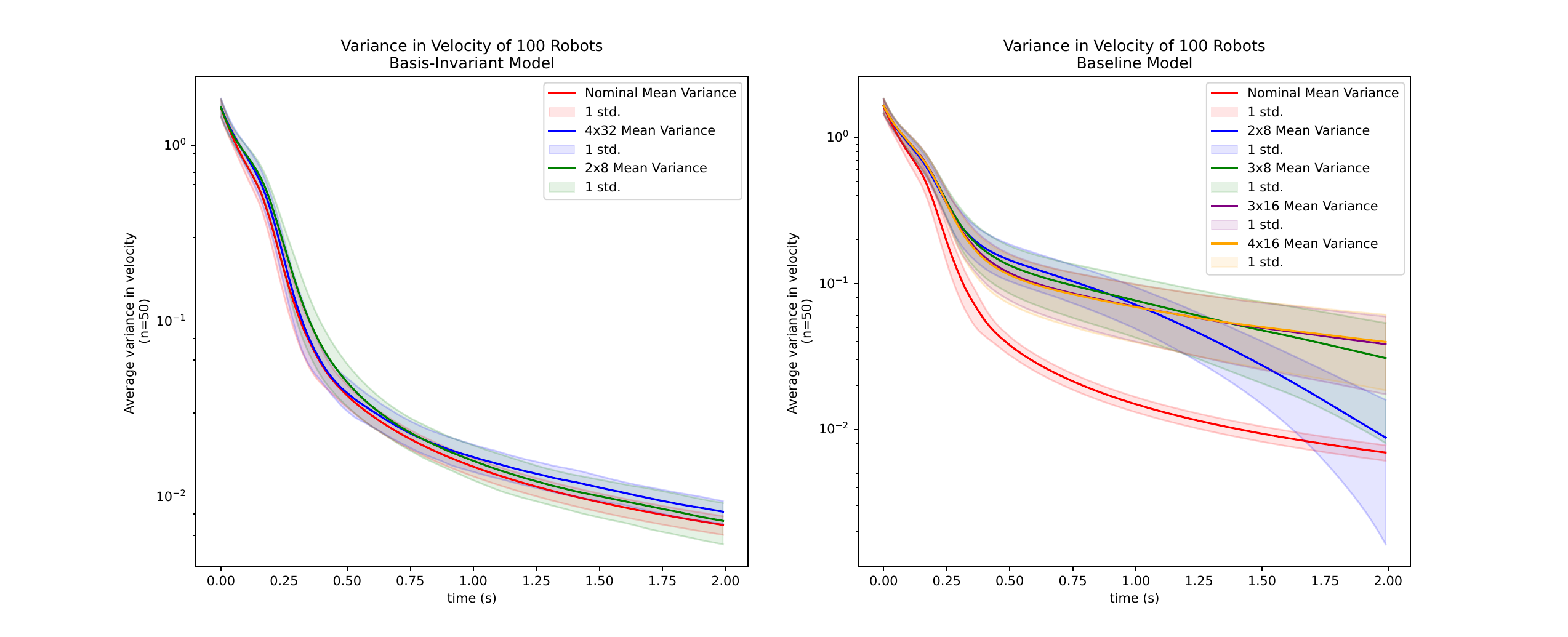}
    \caption{As seen in the left-hand plot, the smallest and largest equivariant models tested both track the nominal controller very well. The seven intermediate cases are excluded so not to crowd the plot, but they show similar results to those presented in this figure. The right-hand figure shows that the baseline GNN was not able to approximate the nominal controller as well as the basis-invariant model. Aside from not tracking the nominal controller well, the standard deviation of velocity variance is also much higher. This indicates that the controller learned by the baseline GNN is less consistent than that learned by the basis-invariant model.}
    \label{fig:mega}
\end{figure*}
To investigate how accurately and efficiently each class of GNN can learn the nominal controller, 9 versions of each are trained each with a different number of layers (2, 3, or 4) and layer width (8, 16, or 32). The variance of the velocity of all agents at each timestep is used as a metric to track the performance of each model. Models are tested on the same batch of 50 episodes.

Every single basis invariant model trained closely matched the performance of the nominal controller. As seen in the left side of Figure~\ref{fig:mega}, the velocity variance achieved by the smallest and the largest basis-invariant networks trained closely tracks the velocity variance of the nominal controller. The standard deviation of the velocity variance across trials is also quite narrow in both cases, again similar to that of the nominal controller. The fact that the basis-invariant GNN with only two 8-neuron layers is able to track the nominal controller accurately suggests that the basis-invariant architecture we proposed is efficient at learning tasks where geometric data must be converted between frames.


The performance of the baseline GNN controller does not closely match the performance of the nominal controller in any case, though it is able to learn a policy to lower velocity variance. As seen in the right side of Figure~\ref{fig:mega}, increasing the number of parameters in the baseline network actually results in a \textit{decrease} in performance. One possible reason for this performance drop is that the larger models have a higher capacity to overfit on data. A larger replay buffer and more epochs of training could recover some performance, though the baseline model's inability to generalize off the same data that the equivariant model was able to indicates that it is a less suitable candidate for learning geometric tasks on graphs without a global reference frame.

\subsubsection{Generalization}
\begin{figure*}
    \centering
    \includegraphics[width=1\linewidth]{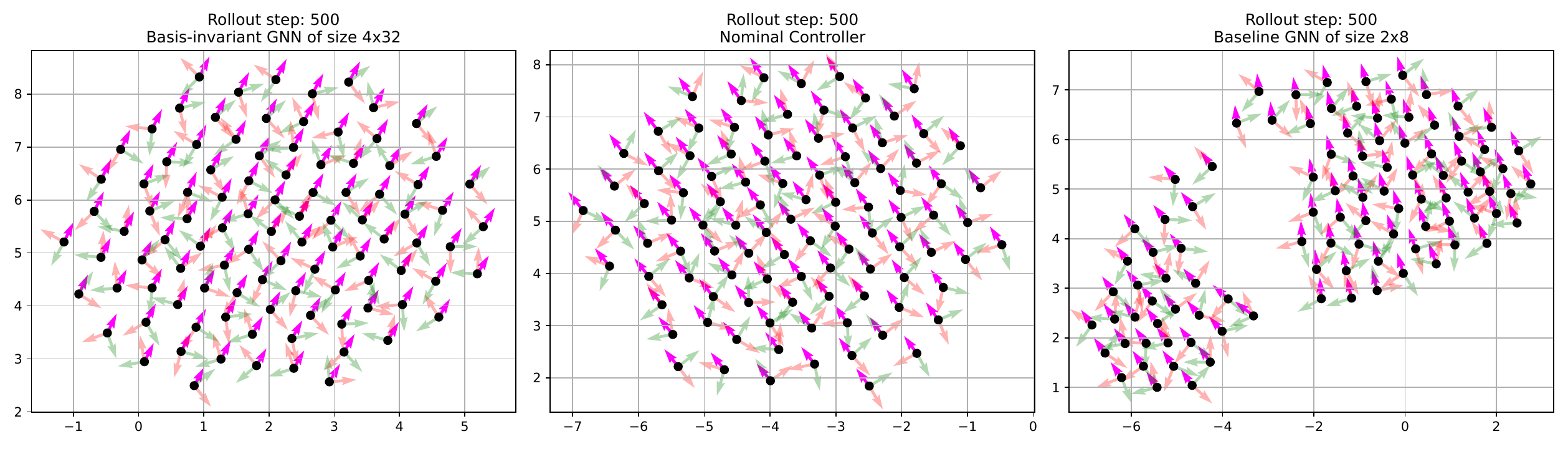}
    \caption{Shown above are the results of the extended 5 second rollouts of the basis-invariant GNN, nominal controller, and baseline GNN on an environment with 100 agents. Pink arrows indicate the velocity of each agent, and the faded red and green arrows the $x$ and $y$-axes of their body frames. The swarm visualizations for the extended-run rollouts demonstrates how similar the behavior of the basis-invariant GNN is to that of the nominal controller. The magnitudes of the velocities in the basis-invariant GNN and the nominal controller are similar to that of the baseline and the spacing is consistent. In the swarm controlled by the baseline GNN, the velocities of its agents are inconsistent and the agents are unevenly spaced. There are two contingents of agents pulling in different directions, which indicates that velocity information from far away neighbors is not being propagated or utilized well enough. }
    \label{fig:rolls}
\end{figure*}
We perform two experiments to test the ability of the model to generalize to unseen conditions: an extended run simulation, and a simulation with a reduced communication radius. 

\paragraph{Extended Run Simulation}
During training, models are only trained on episodes with a length of 2 seconds. In this experiment, we increase the episode length to 5 seconds to see how performance evolves as the models encounter increasingly unfamiliar data. The velocity variances for the best models are presented in Figure~\ref{fig:extend}, and the state of the swarms at the end of the simulation are presented in Figure~\ref{fig:rolls}.

\begin{figure*}
    \centering
    \includegraphics[width=0.9\linewidth]{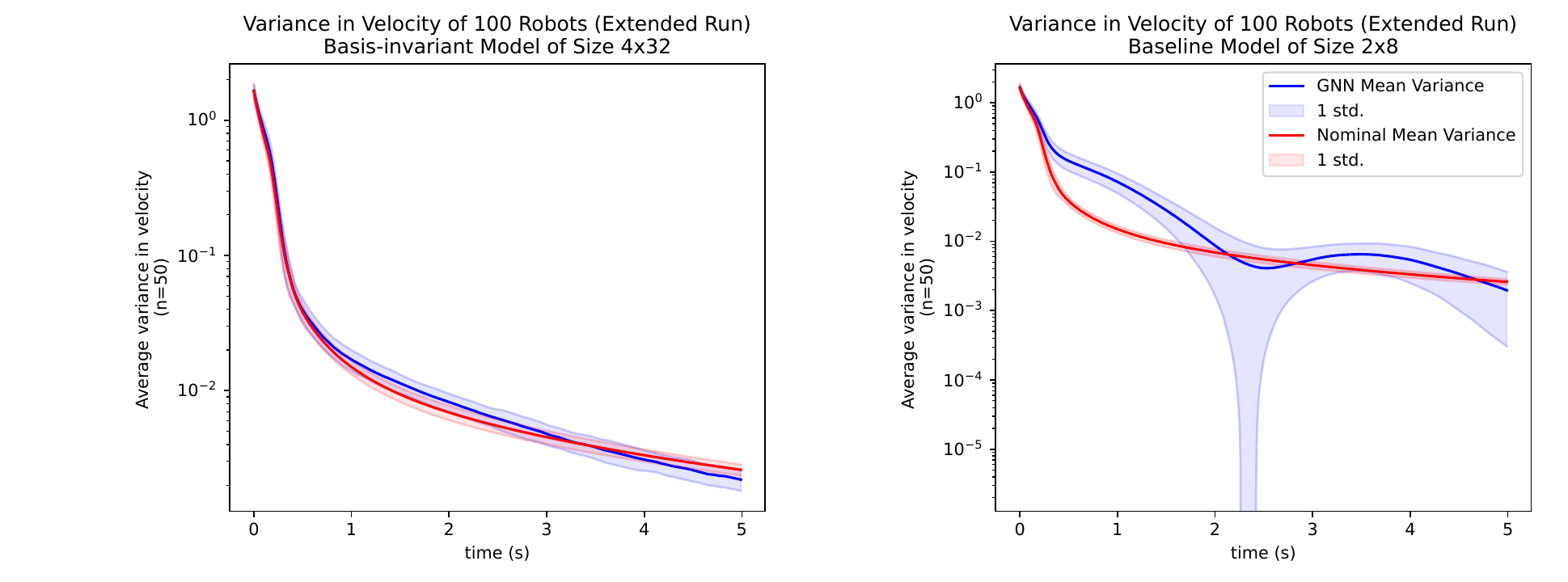}
    \caption{The extended run velocity variance of the best-performing basis-invariant and baseline models indicate different levels of generalization to situations proximal to those seen in training data. The invariant model continues to faithfully follow the performance curve of the nominal controller for the duration of the experiment. The standard deviation of the baseline model's performance increases as it encounters unfamiliar states. A spike in variance can be observed at 2.2 seconds, though the log scale does exaggerate its magnitude.}
    \label{fig:extend}
\end{figure*}

The velocity variance data in Figure~\ref{fig:extend} shows that our basis-invariant GNN approximated the nominal controller extremely well. Even in an extended run---as the swarm converges beyond a point that the GNN has ever seen---the basis-invariant model performance follows that of the nominal controller with very little deviation. Qualitatively, the shape of the basis-invariant GNN's swarm and that of the nominal controller look very similar. Figure~\ref{fig:rolls} shows that the alignment and magnitudes of agent velocities are similar, as is the pattern of agent spacing.

The baseline GNN does not completely diverge in the extended run, however Figure~\ref{fig:extend} indicates that its performance does become more unpredictable. The increase in standard deviation of velocity variance suggests that the network cannot generalize very well, and acts somewhat unexpectedly when faced with unfamiliar situations. The shape of the baseline GNN's swarm in Figure~\ref{fig:rolls} captures a larger failure of the model that is not captured with velocity variance: inaccurate inter-agent positioning. This is not unique to the extended run, though the longer runtime does allow the swarm bifurcation to progress further and take the swarm into an increasingly unfamiliar state where it generalizes poorly.

\paragraph{Reduced Communication Radius}
\begin{figure*}
    \centering
    \includegraphics[width=0.9\linewidth]{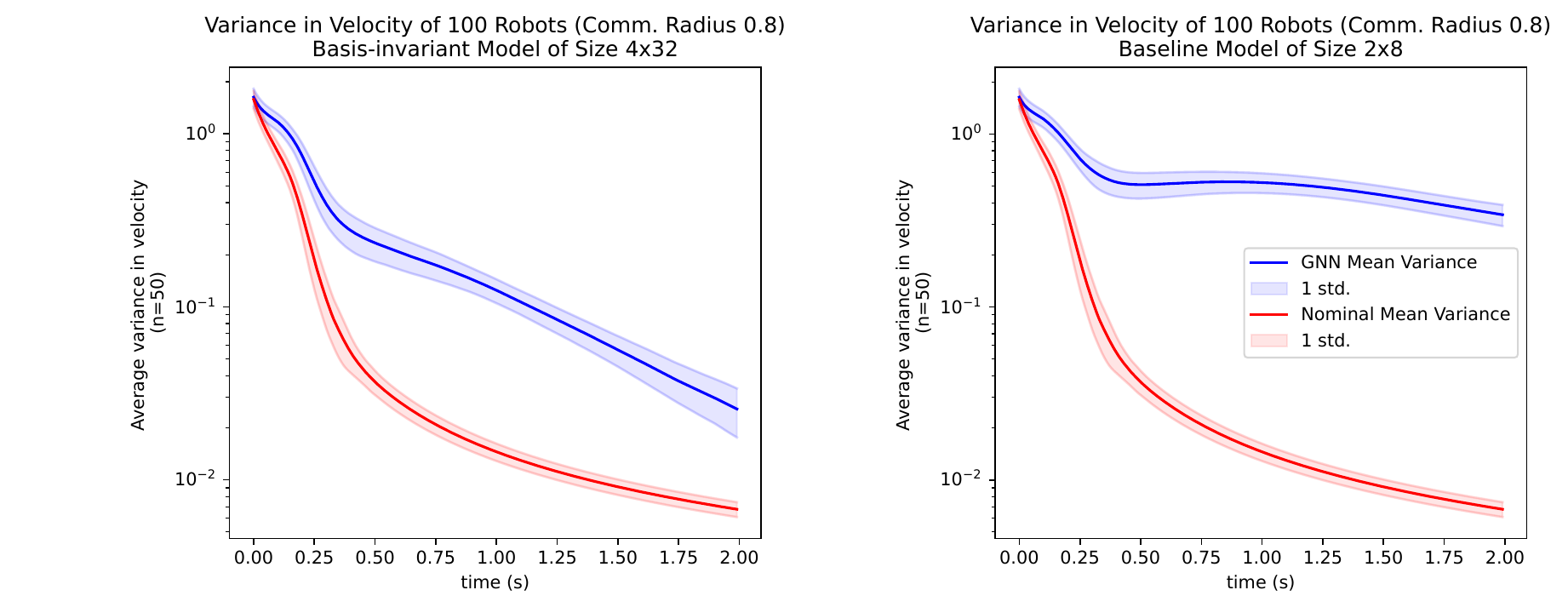}
    \caption{Lowering the communication radius increased the overall velocity variance of both the basis-invariant and baseline GNNs. However, the basis-invariant GNN is less affected, remaining within an order of magnitude of the nominal controller's velocity variance while the baseline GNN is nearly two orders of magnitude removed. }
    \label{fig:lowcomm}
\end{figure*}
To investigate how well the baseline and basis-invariant GNNs generalize to environments with different parameters, we shrink the communication radius from 1 unit to 0.8 units and measure velocity variance.

Shrinking the communication radius by just 20\% degrades the numerical performance of the basis-invariant model in terms of velocity variance, but qualitatively the controller is still able to emulate the nominal controller. As seen in Figure~\ref{fig:lowcomm}, the velocity variance of the basis-invariant model no longer tightly tracks that of the nominal controller. However, it does still reliably decrease over time. Looking at the structure of the basis-invariant GNN's swarm in Figure~\ref{fig:lowroll}, it is still very similar to that of the nominal controller. One agent did fall off the back of the swarm, and there is visibly a slight increase in the variance of the velocities' magnitudes, but agent spacing and velocity alignment is largely unaffected by the change in communication radius.

Looking now to the performance of the baseline model with a lower communication radius, a much more drastic decrease in performance is evident. Starting with the variance in velocity, the 2-layer, 8 neuron wide model whose velocity variance terminated near the nominal controller's in the standard environment (Figure~\ref{fig:mega}) gets nowhere near it in Figure~\ref{fig:lowcomm}. By the end of the simulation, velocity variance of the baseline GNN's swarm is nearly two orders of magnitude higher than that of the nominal controller. Figure~\ref{fig:lowroll} shows the directions of the agents distributed approximately uniformly in a 90 degree cone. The spacing between agent is uneven, and follows a curved path that is not present in the nominal controller.

Examining the generalization capabilities of the basis-invariant and baseline GNNs suggests that the basis-invariant model generalizes better than the baseline. While it is necessary to study the performance of this GNN architecture on more tasks before making any sweeping claims about it's broader performance characteristics, this architecture's dominance at this task suggests that it is likely well-suited for other tasks in environments without a local reference frame. The transformation of latent space messages into the local frame is a powerful inductive bias that allows the basis-invariant architecture to forego \textit{learning} how to rotate high-dimensional message vectors in the latent space, and just use them as if they were produced locally.

\begin{figure*}
    \centering
    \includegraphics[width=1\linewidth]{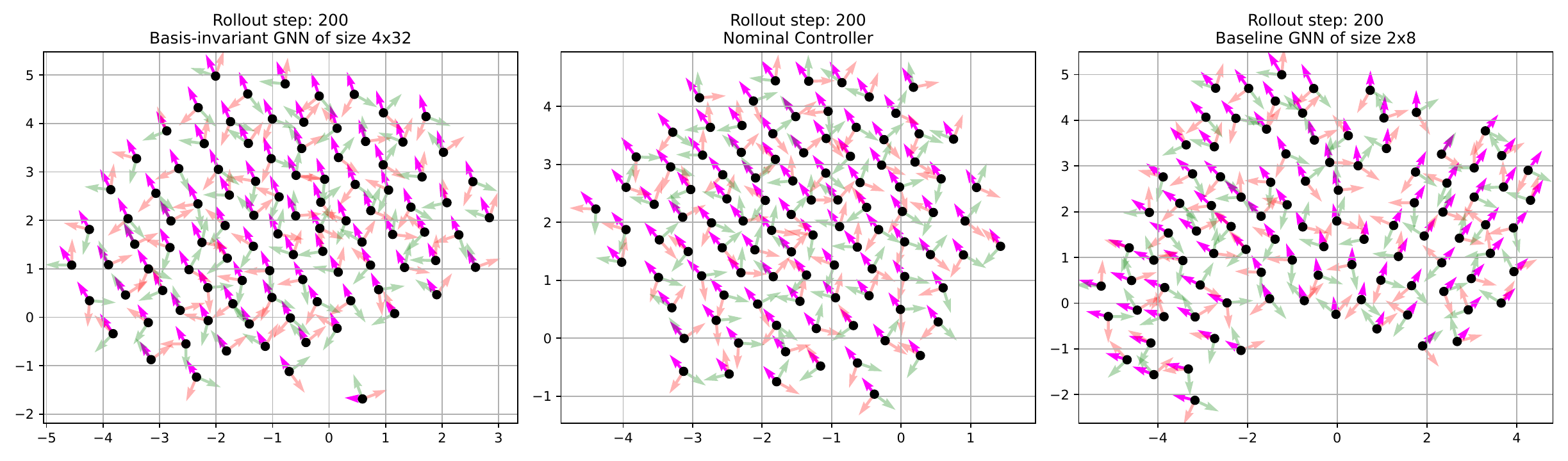}
    \caption{Reducing the communication radius had a far greater impact on the baseline GNN's performance than on the basis-invariant's. The swarm corresponding to the basis-invariant GNN is largely unchanged from Figure~\ref{fig:rolls}---the only deviations being one agent dropped off the back and a slight increase in the variance of the magnitude of the velocities. The baseline GNN's swarm on the other hand became splayed out like a fan. By the end of the 2 second simulation, there are large swaths of the swarm traveling in orthogonal directions.}
    \label{fig:lowroll}
\end{figure*}

\section{Conclusion}
This paper presents a GNN architecture for geometric control tasks on distributed systems that is analytically invariant to the choice body frames of each agent. Complex-value parameterization enables explicit transformations between latent-space vector spaces using 2D rotation information that is available in the environment. The resulting network can quickly learn tasks that use only local information expressed in local reference frames. Our model outperforms  real-valued baseline GNN in terms of representational capacity and generalization. 

One notable limitation of our proposed GNN architecture is that the model required non-linear features for it to learn our nominal controller. While the baseline GNN also needed these features, future work could design and analyze a basis-invariant GNN layer that has learnable parameters and nonlinearities before the activation function.

\bibliographystyle{IEEEtran}
\input{references.bbl}

\end{document}

%% file: rov.tex
\usepackage[utf8]{inputenc}
\usepackage{tikz}

\def\lw {0.55}
\tikzset{rov/.pic={
\begin{scope}[scale=0.05, thin]
        \path[draw=black,line cap=round,line join=round,line width=\lw] (8.5829, 
  20.9009) -- (12.4171, 20.9009) -- (12.4171, 8.7991) -- (8.5829, 8.7991) -- 
  cycle;

  \path[draw=black,line cap=round,line join=round,line width=\lw] (8.5829, 
  20.9009).. controls (8.6274, 21.3705) and (8.8515, 21.8209) .. (9.1994, 
  22.1395).. controls (9.5473, 22.4581) and (10.0156, 22.6419) .. (10.4873, 
  22.645).. controls (10.9632, 22.6482) and (11.4381, 22.4672) .. (11.7912, 
  22.1481).. controls (12.1443, 21.8289) and (12.3722, 21.3747) .. (12.4171, 
  20.9009);

  \path[draw=black,line cap=round,line join=round,line width=\lw] (15.5853, 
  23.0393).. controls (15.5853, 23.0393) and (14.364, 23.0497) .. (13.6117, 
  23.0393).. controls (12.8593, 23.0288) and (12.8504, 22.2579) .. (12.8504, 
  22.2579) -- (12.8504, 16.4983).. controls (12.8504, 16.4983) and (12.949, 
  16.0637) .. (13.2941, 16.3298).. controls (13.6393, 16.5959) and (14.1451, 
  17.0044) .. (14.6887, 17.0044)(13.8147, 23.0393).. controls (13.8147, 23.0393)
   and (15.036, 23.0497) .. (15.7883, 23.0393).. controls (16.5407, 23.0288) and
   (16.5496, 22.2579) .. (16.5496, 22.2579) -- (16.5496, 16.4983).. controls 
  (16.5496, 16.4983) and (16.451, 16.0637) .. (16.1059, 16.3298).. controls 
  (15.7607, 16.5959) and (15.2549, 17.0044) .. (14.7113, 17.0044)(15.5853, 
  6.6607).. controls (15.5853, 6.6607) and (14.364, 6.6503) .. (13.6117, 
  6.6607).. controls (12.8593, 6.6712) and (12.8504, 7.4421) .. (12.8504, 
  7.4421) -- (12.8504, 13.2017).. controls (12.8504, 13.2017) and (12.949, 
  13.6363) .. (13.2941, 13.3702).. controls (13.6393, 13.1041) and (14.1451, 
  12.6956) .. (14.6887, 12.6956)(13.8147, 6.6607).. controls (13.8147, 6.6607) 
  and (15.036, 6.6503) .. (15.7883, 6.6607).. controls (16.5407, 6.6712) and 
  (16.5496, 7.4421) .. (16.5496, 7.4421) -- (16.5496, 13.2017).. controls 
  (16.5496, 13.2017) and (16.451, 13.6363) .. (16.1059, 13.3702).. controls 
  (15.7607, 13.1041) and (15.2549, 12.6956) .. (14.7113, 12.6956)(5.4147, 
  23.0393).. controls (5.4147, 23.0393) and (6.636, 23.0497) .. (7.3883, 
  23.0393).. controls (8.1407, 23.0288) and (8.1496, 22.2579) .. (8.1496, 
  22.2579) -- (8.1496, 16.4983).. controls (8.1496, 16.4983) and (8.051, 
  16.0637) .. (7.7059, 16.3298).. controls (7.3607, 16.5959) and (6.8549, 
  17.0044) .. (6.3113, 17.0044)(7.1853, 23.0393).. controls (7.1853, 23.0393) 
  and (5.964, 23.0497) .. (5.2117, 23.0393).. controls (4.4593, 23.0288) and 
  (4.4504, 22.2579) .. (4.4504, 22.2579) -- (4.4504, 16.4983).. controls 
  (4.4504, 16.4983) and (4.549, 16.0637) .. (4.8941, 16.3298).. controls 
  (5.2393, 16.5959) and (5.7451, 17.0044) .. (6.2887, 17.0044)(5.4147, 6.6607)..
   controls (5.4147, 6.6607) and (6.636, 6.6503) .. (7.3883, 6.6607).. controls 
  (8.1407, 6.6712) and (8.1496, 7.4421) .. (8.1496, 7.4421) -- (8.1496, 
  13.2017).. controls (8.1496, 13.2017) and (8.051, 13.6363) .. (7.7059, 
  13.3702).. controls (7.3607, 13.1041) and (6.8549, 12.6956) .. (6.3113, 
  12.6956)(7.1853, 6.6607).. controls (7.1853, 6.6607) and (5.964, 6.6503) .. 
  (5.2117, 6.6607).. controls (4.4593, 6.6712) and (4.4504, 7.4421) .. (4.4504, 
  7.4421) -- (4.4504, 13.2017).. controls (4.4504, 13.2017) and (4.549, 13.6363)
   .. (4.8941, 13.3702).. controls (5.2393, 13.1041) and (5.7451, 12.6956) .. 
  (6.2887, 12.6956);

  \path[draw=black,line cap=round,line join=round,line 
  width=\lw,shift={(-0.0529, -0.01)}] (16.2732, 14.86)arc(0.0:90.0:1.5732 
  and -1.5405)arc(90.0:180.0:1.5732 and -1.5405)arc(180.0:270.0:1.5732 and 
  -1.5405)arc(-90.0:0.0:1.5732 and -1.5405) -- cycle;

  \path[draw=black,line cap=round,line join=round,line width=\lw,cm={ 
  -1.0,-0.0,-0.0,1.0,(21.0529, -0.01)}] (16.2732, 14.86)arc(0.0:90.0:1.5732 and 
  -1.5405)arc(90.0:180.0:1.5732 and -1.5405)arc(180.0:270.0:1.5732 and 
  -1.5405)arc(-90.0:0.0:1.5732 and -1.5405) -- cycle;

  \path[draw=black,fill=black,line cap=round,line join=round,line width=\lw] 
  (15.1495, 14.85).. controls (15.1495, 14.5725) and (14.9246, 14.3476) .. 
  (14.6471, 14.3476).. controls (14.3696, 14.3476) and (14.1446, 14.5725) .. 
  (14.1446, 14.85).. controls (14.1446, 15.1275) and (14.3696, 15.3524) .. 
  (14.6471, 15.3524).. controls (14.9246, 15.3524) and (15.1495, 15.1275) .. 
  (15.1495, 14.85) -- cycle(5.8505, 14.85).. controls (5.8505, 14.5725) and 
  (6.0754, 14.3476) .. (6.3529, 14.3476).. controls (6.6304, 14.3476) and 
  (6.8554, 14.5725) .. (6.8554, 14.85).. controls (6.8554, 15.1275) and (6.6304,
   15.3524) .. (6.3529, 15.3524).. controls (6.0754, 15.3524) and (5.8505, 
  15.1275) .. (5.8505, 14.85) -- cycle;

  \path[draw=black,fill=black,line cap=round,line join=round,line width=\lw] 
  (14.6471, 14.85) -- (16.1163, 15.1999)(14.6471, 14.85) -- (15.0786, 
  16.2973)(14.6471, 14.85) -- (13.6095, 15.9474)(14.6471, 14.85) -- (13.1779, 
  14.5001)(14.6471, 14.85) -- (14.2155, 13.4027)(14.6471, 14.85) -- (15.6847, 
  13.7526)(6.3529, 14.85) -- (4.8837, 15.1999)(6.3529, 14.85) -- (5.9214, 
  16.2973)(6.3529, 14.85) -- (7.3905, 15.9474)(6.3529, 14.85) -- (7.8221, 
  14.5001)(6.3529, 14.85) -- (6.7845, 13.4027)(6.3529, 14.85) -- (5.3153, 
  13.7526);

  \path[draw=black,fill=black,line cap=round,line join=round,line width=\lw] 
  (8.1496, 19.3781) -- (8.5829, 19.3781) -- (8.5829, 17.9024) -- (8.1496, 
  17.9024) -- cycle(12.8504, 19.3781) -- (12.4171, 19.3781) -- (12.4171, 
  17.9024) -- (12.8504, 17.9024) -- cycle(8.1496, 10.3219) -- (8.5829, 10.3219) 
  -- (8.5829, 11.7976) -- (8.1496, 11.7976) -- cycle(12.8504, 10.3219) -- 
  (12.4171, 10.3219) -- (12.4171, 11.7976) -- (12.8504, 11.7976) -- cycle;  
  \coordinate (-center) at (10.5, 14.85);
  \coordinate (-rt) at (14.6471, 14.85);
  \coordinate (-lt) at (6.3529, 14.85);
  \coordinate (-bow) at (10.5, 22.645);
\end{scope}
    
    }
}

%% file: references.bbl